\title[Assured Reinforcement Learning]{Assured RL: Reinforcement Learning with Almost Sure Constraints}
\definecolor{bleudefrance}{rgb}{0.19, 0.55, 0.91}
\definecolor{ao(english)}{rgb}{0.0, 0.5, 0.0}
\newcommand{\addcite}[0]{\ifthenelse{\boolean{showcomments}}
{\textcolor{purple}{(add cite(s)) }}{}}%
\newcommand{\enrique}[1]{  \ifthenelse{\boolean{showcomments}}
{\todo[inline,color=bleudefrance]{Enrique: #1}}{}}
\newcommand{\emmargin}[1]{\ifthenelse{\boolean{showcomments}}{\marginpar{\color{bleudefrance}\tiny EM: #1}}{}}
\newcommand{\juan}[1]{  \ifthenelse{\boolean{showcomments}}
{\todo[inline,color=pink]{Juan: #1}}{}}
\newcommand{\aem}[1]{
\ifthenelse{\boolean{showedits}}
{\added[id=EM]{#1}}
{\!#1\hspace{-4.75pt}}
}
\newcommand{\repem}[2]{
\ifthenelse{\boolean{showedits}}
{\replaced[id=EM]{#1}{#2}}
{\!#1\hspace{-4.75pt}}
}
\newcommand{\dem}[1]{
\ifthenelse{\boolean{showedits}}
{\deleted[id=EM]{#1}}
{}
}
\author{%
 \Name{Agustin Castellano} \Email{acastellano@fing.edu.uy}
 \AND
 \Name{Juan Bazerque} \Email{jbazerque@fing.edu.uy}\\
 \addr Universidad de la República,
 Julio Herrera y Reissig 565,
 Montevideo 11400, Uruguay
 \AND
 \Name{Enrique Mallada} \Email{mallada@jhu.edu}\\
 \addr Johns Hopkins University, 
 3400 N. Charles St., Baltimore, MD 21218, USA%
}
\begin{document}

\maketitle

\begin{abstract}%
 We consider the problem of finding optimal policies for a Markov Decision Process with almost sure constraints on state transitions and action triplets. We define value and action-value functions that satisfy a barrier-based decomposition which allows for the identification of feasible policies independently of the reward process. We prove that, given a policy $\pi$, certifying whether certain state-action pairs lead to feasible trajectories under $\pi$ is equivalent to solving an auxiliary problem aimed at finding the probability of performing an unfeasible transition.
 Using this interpretation, we develop a Barrier-learning algorithm, based on Q-Learning, that identifies such unsafe state-action pairs.
 Our analysis motivates the need to enhance the Reinforcement Learning (RL) framework with an additional signal, besides rewards, called here \textit{damage function} that provides feasibility information and enables the solution of RL problems with model-free constraints. Moreover, our Barrier-learning algorithm wraps around existing RL algorithms, such as Q-Learning and SARSA, giving them the ability to solve almost-surely constrained problems.
\end{abstract}

\begin{keywords}%
  Reinforcement Learning, Constrained MDPs, Safety-critical Systems %
\end{keywords}

\enrique{Comments from meetings: Emphasize model free. Yo tengo que chocarme para aprender, pero tampoco me quiero chocar tanto... fool me once, fool me twice.}

\section{Introduction}

The last decade has witnessed a resurgence of Artificial Intelligence (AI) reaching to levels never experienced before. At the center of many of these successes, Reinforcement Learning~\citep{sutton2018reinforcement} has occupied a critical role, which when combined with modern Machine Learning techniques---such as deep neural networks~\citep{goodfellow2016deep}--- and  increased (energy-efficient) computational power~\citep{jouppi2017datacenter}, has led to astonishing demonstrations of super-human performance. While there were already instances of such accomplishments in the turn of the century~\citep{campbell2002deep,schaeffer1996chinook,schaeffer2001temporal}
\footnote{The prominent instance is the defeat of Garry Kasparov by IBM Deep Blue~\citep{campbell2002deep}.}, today's successes are pervasive and more impressive. Examples include Jeopardy!~\citep{ferrucci2012introduction}, Atari~\citep{mnih2015human}, Go~\citep{silver2016mastering}, StarCraft II~\citep{vinyals2017starcraft}, and even Poker~\citep{nichols2019machine}.


However, this success is overwhelmingly limited to virtual domains and  only ported to the physical realm after training for more than hundreds of equivalent human years~\citep{andrychowicz2020learning}.
There are several challenges that prevent the full realization of RL in physical environments, all of which are closely intertwined. Firstly, due to the high dimensionality of the spatio-temporal domain where learning occurs, thousands of trials (episodes) are needed to achieve accurate policies. Secondly, general-purpose RL algorithms lack the necessary safety guarantees that are required in such applications.
While several methods have been proposed to solve safe RL problems~\citep{Garcia}, {they often assess safety via soft penalties that lack hard guarantees.} 
This leads to the need for training using simulated environments. 
Finally, policies learned in simulated environments tend to perform poorly in practice. This further points to the need to enhance training with random permutations of the environment, which in turn makes training even more computational intensive. 

Such cyclic dependence between computing requirements, safety requirements and randomized virtual training has given rise to a renewed interest on the study of Constrained Markov Decision Processes (CMDPs)~\citep{1998.Altman}. In this setting the conventional return to be maximized is supplemented with one or several additional returns that are to satisfy certain lower bound, in expectation. Due to the additive structure of both the objective and constraints, it is possible to frame such problems as constrained optimization problem where the decision variables correspond to occupational distributions. Such approach has led to a rich body of literature that proposes RL algorithms using penalized primal methods~\citep{2006.Geibel},  primal-dual methods~\citep{dxb,2019.Paternainyfa}, and methods with additional assumptions on the model~\citep{2019.Zanon,2019.Cheng41c3}.
Unfortunately, there are some caveats with these approaches. Firstly, while expectation-based constraints of sum of rewards lead to tractable problems, such constraints are not satisfactory for safety-critical applications. Secondly, the above-mentioned solution methods either only approximately satisfy the constraints, or  guarantee constraint satisfaction asymptotically. 
{As a result, such schemes tend to experience a large number of constraint violations during training.}
 
In this work, we aim at developing Reinforcement Learning algorithms that can learn and impose safety constraints during training. Unlike \citep{2017.Achiam} and \citep{2020.Wachi}, who either require prior knowledge of the constraints or seek to learn the constraint function, our algorithm is model-free. Precisely, we consider a finite Markov Decision Process where state transitions and action triplets $(s,a,s')$ must lie within a feasibility set $\mathcal{F}$, almost surely. We show that given a policy $\pi$, the value and action value functions satisfy a barrier-based decomposition that decouples the problem of learning the feasibility set $\mathcal{F}$ from the reward process (Section \ref{sec:decomposition}). Moreover, certifying whether an initial state-action pair leads to a feasible trajectory with probability one, is equivalent to computing the action-value function of an auxiliary MDP, with identical transition probabilities, but different reward function (called here \emph{damage}), that seeks to quantify the probability of preforming an unsafe transition. {Using this equivalence, we develop a barrier-learning algorithm that learns an action-barrier function that implicitly characterizes the set of all state-action pairs $(s,a)$ that with probability one lead to a transition $(s,a,s')\in \mathcal{F}$.}  Our barrier-learning algorithm wraps around any standard RL algorithm such as Q-Learning~\addcite and SARSA~\addcite, and incrementally restricts the exploration of infeasible triplets that are learnt \emph{during training} (Section \ref{sec:assured-rl}). We illustrate our findings with numerical experiments in Section~\ref{sec:experiments} and conclude in Section~\ref{sec:conclusions}.

\section{Value function decomposition}\label{sec:decomposition}

Consider a Constrained Markov Decision Process (CMDP) with finite state space $\mathcal{S}$ and finite action space $\mathcal{A}$, a reward set $\mathcal{R}$ and a transition kernel $p$ which specifies the conditional transition probability $p(s',r\mid s,a)$, from state $s\in\mathcal{S}$ to state $s'\in\mathcal{S}$ with reward $r\in\mathcal{R}$ under action $a\in\mathcal{A}$. There are constraints that the agent must satisfy almost surely, which are specified through a set $\mathcal{F}$. A triplet $(s,a,s')$ is safe if it belongs to $\mathcal{F}$, and is unsafe otherwise. As usual, a policy $\pi$ induces a probability distribution over the action space for a given state $\pi\left(\cdot\mid s\right)$.
In this context, our goal is to maximize the value function for each possible starting state while ensuring constraint satisfaction at all times:
\begin{align}
    V^*(s):= \max_\pi \mathbb{E}_{\pi}&\left[\sum_{t=0}^\infty\gamma^t R_{t+1} ~\big|~ S_0=s\right]\label{eq:CRL1}\\
    \text{s.t.:}&\quad (S_t,A_t,S_{t+1})\in\mathcal{F}\quad\textit{a.s.}~~\forall t\nonumber
\end{align}
where the expectation in the objective is taken over the trajectories induced by $\pi$. 
Let us then define the value function $V^\pi$ for a specific policy $\pi$, in which the constraints in \eqref{eq:CRL1} are embedded inside the expectation. 
\begin{align}
    V^\pi(s) &:= \mathbb{E}_{\pi}\left[\sum_{t=0}^\infty\big(\gamma^t R_{t+1}-\mathbb{I}_{\left\{(S_t,A_t,S_{t+1})\in\mathcal{F}\right\}}\big) ~\big|~ S_0=s\right]\label{eq:v_pi}
\end{align}
where we introduced the barrier indicator function 
\begin{equation}\mathbb{I}_{\left\{\cdot\right\}}=\begin{cases}0&\text{if } \cdot \text{~is true}\\\infty&\text{if } \cdot \text{~is false}\end{cases}\label{eq:barrier_indicator}\end{equation}
The proposed value function definition will prove useful in two senses: firstly, we will show that maximizing \eqref{eq:v_pi} is the same as \eqref{eq:CRL1}. 
Secondly, the additional term in \eqref{eq:v_pi} will allow for a barrier-based decomposition of the value function, which will aid in the learning of constraints.\\
Before proceeding any further, we make a technical remark regarding the barrier term in \eqref{eq:v_pi}.

\begin{remark}[A note on the barrier term]
The reason for the addition of the indicator function term in \eqref{eq:v_pi} is to make unsafe policies yield $V^\pi(s)=-\infty$. Special care must be taken, though. Suppose there is an unsafe triplet $(s,a,s')\notin\mathcal{F}$ that occurs with zero probability under $\pi$. Unrolling the expectation in \eqref{eq:v_pi} would give a term $p\left(s'\mid s,a\right)\mathbb{I}_{\left\{\left(s,a,s'\right)\in\mathcal{F}\right\}}$, which is a zero-times-infinity indetermination. To remedy this  we instead consider the following definition for the value function
\begin{equation}
    V^\pi(s)=\lim_{\lambda\rightarrow\infty}\mathbb{E}_{\pi}\left[\sum_{t=0}^\infty\big(\gamma^t R_{t+1}-\lambda\mathds{1}_{\left\{(S_t,A_t,S_{t+1})\not\in\mathcal{F}\right\}}\big)  ~\big|~  S_0=s\right]\label{eq:v_pi_lim}
\end{equation}
where $\mathds{1}_{\{\cdot\}}$ is one if the condition is true and zero otherwise. Throughout the paper we will abuse notation and often speak of \eqref{eq:v_pi}. The reader should bear in mind that we are instead referring to \eqref{eq:v_pi_lim}.
\end{remark}
With that remark aside, we go on to show the equivalence between \eqref{eq:CRL1} and the maximization of \eqref{eq:v_pi}.

\begin{lemma}[Equivalence]
\label{lemma:equivalence}
Under the common extension where an unfeasible policy for \eqref{eq:CRL1} yields $V^\pi(s)=-\infty$, Problem \eqref{eq:CRL1} is equivalent to the maximization of \eqref{eq:v_pi}, that is

\begin{equation}
    \max_\pi V^\pi(s)=\max_\pi \mathbb{E}_{\pi}\left[\sum_{t=0}^\infty\big(\gamma^t R_{t+1}-\mathbb{I}_{\left\{(S_t,A_t,S_{t+1})\in\mathcal{F}\right\}}\big) ~\big|~ S_0=s\right]\label{eq:CRL2}
\end{equation}
\begin{proof}
If a policy $\pi_0$ is unfeasible for Problem \eqref{eq:CRL1}, then $\exists t : P\left((S_t,A_t,S_{t+1})\notin \mathcal{F}\right)>0$. This non-zero probability renders the expected value in \eqref{eq:CRL2} to $-\infty$ for $\pi_0$.
Conversely, if a policy $\pi_1$ is feasible for \eqref{eq:CRL2} then it must necessarily hold that $\left(S_t,A_t,S_{t+1}\right)\in\mathcal{F}$ almost surely $\forall t$, and hence $\pi_1$ is feasible for \eqref{eq:CRL1} as well. Therefore the feasible sets of both problems coincide. Lastly, for every feasible policy it must hold $\mathbb{I}_{\left\{(S_t,A_t,S_{t+1})\in\mathcal{F}\right\}}=0 ~\forall t$, in which case the function being maximized is the same. Then the optimal sets of the two problems coincide. 
\end{proof}
\end{lemma}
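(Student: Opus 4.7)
The plan is to establish the equivalence by a set-equality style argument: show that the set of policies that are feasible for \eqref{eq:CRL1} coincides with the set of policies that attain a finite value in \eqref{eq:v_pi_lim}, and that on this common set the two objective functionals agree.

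First, I would split the policy space into feasible and unfeasible with respect to the constraints in \eqref{eq:CRL1}. Given an unfeasible $\pi_0$, there must exist some time $t$ for which $\Pr_{\pi_0}\big((S_t,A_t,S_{t+1})\notin\mathcal{F}\big)>0$. Working with the regularised definition in \eqref{eq:v_pi_lim}, the corresponding summand contributes at least $\gamma^t \lambda \Pr_{\pi_0}\big((S_t,A_t,S_{t+1})\notin\mathcal{F}\big)$ to the penalty, and sending $\lambda\to\infty$ forces $V^{\pi_0}(s)=-\infty$. This shows unfeasible policies are driven out of the maximisation, matching the convention that extends \eqref{eq:CRL1} by assigning $-\infty$ to unfeasible $\pi$.

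Next, for a feasible $\pi_1$, all indicators $\mathds{1}_{\{(S_t,A_t,S_{t+1})\notin\mathcal{F}\}}$ vanish almost surely for every $t$, so the penalty term in \eqref{eq:v_pi_lim} equals zero for every $\lambda$, the limit is trivial, and $V^{\pi_1}(s)$ reduces exactly to the expected discounted return in \eqref{eq:CRL1}. Hence the two objectives coincide on the feasible set, and the $\argmax$'s match.

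The one delicate step, and the reason the remark on \eqref{eq:v_pi_lim} precedes the lemma, is handling the $0\cdot\infty$ indeterminacy in the naive definition \eqref{eq:v_pi}: a triplet $(s,a,s')\notin\mathcal{F}$ with $p(s'\mid s,a)=0$ under $\pi$ must not contribute. Using the $\lambda\to\infty$ formulation circumvents this, since a zero-probability event contributes $\lambda\cdot 0=0$ for every finite $\lambda$ and therefore in the limit, while any strictly positive probability of violation blows up. I would also briefly note the exchange of limit, expectation, and infinite sum: monotonicity of the penalty in $\lambda$ (the expression in brackets decreases pointwise as $\lambda$ grows when any violation occurs) justifies taking the limit outside the expectation by monotone convergence, and the discount factor $\gamma<1$ together with boundedness of rewards handles the infinite horizon sum. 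With these two directions together, the feasible sets and the objectives agree, proving \eqref{eq:CRL2}.
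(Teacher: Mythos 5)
Your proposal is correct and follows essentially the same route as the paper's own proof: show that unfeasible policies for \eqref{eq:CRL1} are assigned $-\infty$ by the penalised objective, that policies with finite value must satisfy the constraints almost surely (so the feasible sets coincide), and that on the feasible set the penalty vanishes so the objectives agree. Your added care about the $\lambda\to\infty$ limit and monotone convergence is a welcome tightening of a step the paper glosses over; the only nit is that the penalty term in \eqref{eq:v_pi_lim} is not discounted by $\gamma^t$ as you wrote, though this does not affect the conclusion.
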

While solving \eqref{eq:CRL1} is of our utmost interest, we have just shown that, to this end, we can solve \eqref{eq:CRL2} instead. In what follows we will take this one step further, and show that \eqref{eq:v_pi} admits a \textit{barrier-based decomposition} and can be cast as the sum of two value functions: one that checks only whether the policy in consideration is feasible (which will be the main focus of this work) and one that optimizes the return, provided the policy is feasible. The main idea behind this decoupling being that the search for feasible policies will be, in practice, an easier task to undergo.\\ 
To this end we define an auxiliary \textit{barrier-based} value function $F^\pi$ that will relate to $V^\pi$.
\begin{align}
    F^\pi(s) &= \mathbb{E}_{\pi}\left[-\sum_{t=0}^\infty \mathbb{I}_{\left\{(S_t,A_t,S_{t+1})\in\mathcal{F}\right\}} ~\big|~ S_0=s\right]\label{eq:f_pi}
\end{align}
We proceed similarly for the action-value function $Q^\pi$ and its counterpart $B^\pi$
\begin{align}
    Q^\pi(s,a) &= \mathbb{E}_{\pi}\left[\sum_{t=0}^\infty\big(\gamma^t R_{t+1}-\mathbb{I}_{\left\{(S_t,A_t,S_{t+1})\in\mathcal{F}\right\}}\big) ~\big|~ S_0=s, A_0=a\right]\label{eq:q_pi}\\
    B^\pi(s,a) &= \mathbb{E}_{\pi}\left[-\sum_{t=0}^\infty\mathbb{I}_{\left\{(S_t,A_t,S_{t+1})\in\mathcal{F}\right\}} ~\big|~ S_0=s, A_0=a\right]\label{eq:b_pi}
\end{align}
Searching for policies that are optimal for \eqref{eq:CRL2} for each possible state is our original goal. By contrast, a problem such as maximizing \eqref{eq:f_pi} is one that seeks to find \textit{safe} policies, in the sense that they are \textit{feasible} for \eqref{eq:CRL2}. The main idea underpinning our work is that we can jointly work on optimizing \eqref{eq:f_pi}, which reduces the search over the policy space, while at the same time maximizing the return present in \eqref{eq:CRL2}.
In the following Theorem we establish a fundamental decomposition relationship between the value functions and their auxiliary counterparts.

\begin{theorem}[Barrier-based decomposition]
\label{thm:decomposition}
 Assume rewards $R_{t+1}$ are bounded almost surely for all $t$. Then, for every policy $\pi$ 
 \begin{equation}
     V^\pi(s) = V^\pi(s) + F^\pi(s) \label{eq:v_decomposition}
 \end{equation}
 \begin{equation}
     Q^\pi(s,a) = Q^\pi(s,a) + B^\pi(s,a) \label{eq:q_decomposition}
 \end{equation}
\begin{proof}
We shall prove the relationship in \eqref{eq:v_decomposition} regarding the value functions \eqref{eq:v_pi},\eqref{eq:f_pi}, noting that the proof for \eqref{eq:q_decomposition} is similar. The following identities hold, as explained below.
\begin{align}
    &V^\pi(s) =\mathbb{E}_{\pi}\left[\sum_{t=0}^\infty\big(\gamma^t R_{t+1}-\mathbb{I}_{\left\{(S_t,A_t,S_{t+1})\in\mathcal{F}\right\}}\big)  ~\big|~  S_0=s\right]\label{eq:proof1_1}\\
    \nonumber\\
    &= \mathbb{E}_{\pi}\left[\sum_{t=0}^\infty\big(\gamma^t R_{t+1}-\mathbb{I}_{\left\{(S_t,A_t,S_{t+1})\in\mathcal{F}\right\}}\big)  ~\big|~  S_0=s\right] + \mathbb{E}_{\pi}\left[\sum_{t=0}^\infty-\mathbb{I}_{\left\{(S_t,A_t,S_{t+1})\in\mathcal{F}\right\}} \mid S_0=s\right]\label{eq:proof1_2}
\end{align}
To show that \eqref{eq:proof1_1} can be separated as in \eqref{eq:proof1_2}, first suppose the policy considered in  \eqref{eq:proof1_1} is feasible (for Problem \eqref{eq:CRL2}). 
This necessarily implies that $(S_t,A_t,S_{t+1})\in\mathcal{F}~~\textit{a.s.}~~\forall t$, which makes the second term in \eqref{eq:proof1_2} vanish. Conversely, suppose that the policy in consideration is infeasible. This together with the fact that rewards are bounded almost surely makes \eqref{eq:proof1_1} yield $V^\pi(s)=-\infty$, which is the same value attained by both terms in \eqref{eq:proof1_2}.
\end{proof}
\end{theorem}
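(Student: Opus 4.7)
The plan is to prove the value-function identity \eqref{eq:v_decomposition} by a case analysis on whether the policy $\pi$ is feasible for Problem \eqref{eq:CRL2}, using the limit form \eqref{eq:v_pi_lim} as the rigorous definition of the barrier expectations. The $Q$-function identity \eqref{eq:q_decomposition} will then follow by exactly the same argument after conditioning on $A_0=a$, so I will only detail the $V$ case. The key observation driving the whole argument is that the barrier contribution either vanishes almost surely (when $\pi$ is feasible) or dominates the bounded reward term (when $\pi$ is infeasible).

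First, I would take an arbitrary policy $\pi$ and split the expectation inside \eqref{eq:v_pi_lim} using linearity for each finite $\lambda$:
\begin{equation*}
\mathbb{E}_\pi\!\left[\sum_{t=0}^\infty \gamma^t R_{t+1} -\lambda\mathds{1}_{\{(S_t,A_t,S_{t+1})\notin\mathcal{F}\}}\right]
= \mathbb{E}_\pi\!\left[\sum_t \gamma^t R_{t+1}\right] - \lambda\,\mathbb{E}_\pi\!\left[\sum_t \mathds{1}_{\{(S_t,A_t,S_{t+1})\notin\mathcal{F}\}}\right].
\end{equation*}
The bounded-reward assumption, together with $\gamma<1$, makes the first expectation finite and independent of $\lambda$. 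The second expectation depends only on the indicator of infeasible transitions and, once multiplied by $-\lambda$ and passed to the limit, reproduces exactly the definition of $F^\pi(s)$ appearing in \eqref{eq:f_pi}.

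Now I would split into two cases. If $\pi$ is feasible, then $(S_t,A_t,S_{t+1})\in\mathcal{F}$ almost surely for every $t$, so the indicator sum is zero almost surely. Hence $F^\pi(s)=0$, the $\lambda$-term vanishes, and the right-hand side of \eqref{eq:v_decomposition} collapses to $V^\pi(s)+0=V^\pi(s)$, matching the left-hand side. If $\pi$ is infeasible, there exists $t$ with $\mathbb{P}_\pi\{(S_t,A_t,S_{t+1})\notin\mathcal{F}\}>0$, so $\mathbb{E}_\pi[\sum_t \mathds{1}_{\{\cdot\notin\mathcal{F}\}}]>0$. Multiplying by $-\lambda$ and taking $\lambda\to\infty$ drives both $V^\pi(s)$ and $F^\pi(s)$ to $-\infty$, while the bounded reward term remains finite. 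Thus both sides of \eqref{eq:v_decomposition} equal $-\infty$, and the identity is preserved under the standard extended-arithmetic convention $-\infty + (-\infty)=-\infty$ used in the paper.

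The main obstacle I anticipate is the rigorous handling of the $-\infty$ arithmetic in the infeasible case, since a naive manipulation of \eqref{eq:v_pi} would yield $\infty-\infty$ indeterminations. The remedy, already anticipated by the authors in the remark preceding the theorem, is to carry out the split \emph{before} taking $\lambda\to\infty$, confirm that the decomposition is a genuine linear-of-expectation identity at each finite $\lambda$, and only then pass to the limit on both sides simultaneously. A secondary technical point is the interchange of expectation and infinite sum for the indicator series; this is justified by the monotone convergence theorem since the summands are nonnegative. Once these are in place, extending to the $Q$-function decomposition \eqref{eq:q_decomposition} requires no new ideas: the argument is carried out verbatim under the additional conditioning on $A_0=a$.
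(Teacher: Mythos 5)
Your proof is correct and follows essentially the same route as the paper's: a case analysis on feasibility, where the barrier term vanishes for feasible $\pi$ and both sides collapse to $-\infty$ for infeasible $\pi$. Your version is somewhat more careful than the paper's --- performing the linearity split at finite $\lambda$ before passing to the limit, and invoking monotone convergence for the indicator series --- but these are refinements of the same argument rather than a different approach.
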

\enrique{I'm thinking again about our result. I think that we probably need to explain the properties of the $B^*$. This is because, it is not that we just figure out whether the optimal policy is feasible. When computing $B^*$, any policy that choses an action a s.t. $B^*(s,a)=0$ is a feasible policy. As such, our algorithm constraints the search of Q-Learning to only feasible policies. Thus Q-Learning finds the optimal policy, within the constraint set. This is a non-trivial implication of $B^*$, it somehow learns the feasible set of actions! Also, this is a property of $B^*$, not any $B^\pi$}

The preceding result implies non-trivial consequences. If the learning agent can interact with the environment and have access to rewards $R_{t+1}$ and queries of whether a transition has been safe (i.e. queries of the type $\mathbb{I}_{\left\{(S_t,A_t,S_{t+1})\in\mathcal{F}\right\}}$), then it can independently learn both $Q^\pi(s,a)$ and $B^\pi(s,a)$. Learning (and improving) $B^\pi$ is of great utility, since infeasible state-action pairs are readily coded as $-\infty$ in the corresponding function. This is further discussed in the next remark.

\begin{remark}[Properties of the optimal barrier action-value function $B^*$]
\label{rmk:B_opt}
\begin{equation}
B^*(s,a)=\max_\pi B^\pi(s,a) = \max_\pi\mathbb{E}_{\pi}\left[-\sum_{t=0}^\infty\mathbb{I}_{\left\{(S_t,A_t,S_{t+1})\in\mathcal{F}\right\}} ~\big|~ S_0=s, A_0=a\right] \label{eq:b_pi_opt}
\end{equation}
It is evident that for any policy $\pi$ the entries of $B^\pi(s,a)$ will either be $0$ or $-\infty$.
Having $B^*(s,a)=-\infty$ means that if starting at $s$ with action $a$ and then following any policy, there is an (albeit small) non-zero probability that an unsafe event will be encountered.\\ Conversely, if $B^*(s,a)=0$ then following the optimal policy guarantees we will always ensure constraints while starting from state $s$ and action $a$.
Furthermore, if $B^*$ is available, then any policy that chooses an action such that $B^*(s,a)=0$ is a feasible policy for that state. If, on the other hand, the policy chooses an action which leads to $B^*(s,a)=-\infty$, then that policy can be deemed unsafe immediately, and can be discarded. Notice, however, that this is a particular property of the optimal function $B^*$: for a sub-optimal $\pi$, $B^\pi$ can be padded with lots of $-\infty$ in places where $B^*$ has zeros. Intuitively, this would mean that the sub-optimal policy starting from $(s,a)$ makes unsafe transitions along its trajectory.
In this sense, having access to the optimal action-value function $B^*$ helps constraint the search of any other known algorithms (such as Q-Learning) to only feasible policies. This becomes a joint work of optimizing the return (using one of many possible techniques), while learning the feasible set of actions at the same time. 
\end{remark}


\section{Almost sure constraints}
In this section we model the probability of  satisfying the constraints, laying the ground to construct the algorithm that guarantees safety almost surely. In our way there we connect to the state of the art on safe learning with cumulative expected constraints.

In this direction, and given the triplet $(S_t,A_t,S_{t+1})$, consider the auxiliary random variable $D_{t+1}\in\{0,1\}$ which indicates if $(S_t,A_t,S_{t+1})\in\mathcal F$, that is, $D_{t+1}=1$  if  $(S_t,A_t,S_{t+1})\in\mathcal F$ and zero otherwise. Intuitively, $D_{t+1}$ indicates if there is \emph{damage} in the transition from $S_t$ to $S_{t+1}$, and it is related to the indicator function \eqref{eq:barrier_indicator} through $D_{t+1}=\exp\left(-\mathbb I_{\left\{(S_t,A_t,S_{t+1}\in \mathcal F)\right\}}\right)$.

Accordingly, the conditional transition probability model is augmented to incorporate this new variable, so that $p(s',d|s, a)$ represents the probability of  evolving from state $s$ to state $s'$ facing damage $d$, under action $a$. We further make the following assumptions on  $p(s',d|s, a)$ {that will allow us to learn in episodes. We assume that the trajectory ends when there is damage, which is modeled by moving to an absorbent terminal state $s_D$. This state is reached the first time there is damage $D_{t+1}=1$, yielding  $D_t=0$ thereafter.} Specifically, we set the following conditions on the transition probabilities
\begin{align}
&P(S_{t+1}=s_D,D_{t+1}=0| S_t \neq s_D,A_t=a_t)=0\label{eq:cond_1}\\
&P(S_{t+1}\neq s_D,D_{t+1}=1| S_t \neq s_D,A_t=a_t)=0\label{eq:cond_2}\\
&p(S_{t+1}=s_D,D_{t+1}=0| S_t = s_D,A_t=a_t)=1\label{eq:cond_3}
\end{align}
The first and second conditions state that the transition to the terminal state $s_D$ happens if and only if there is damage $D_{t+1}=1$, while the  third  condition ensures that $s_D$ is absorbent with $D_{t+1}=0$ thereafter. Next, we consider  the probability of violating the constraints at some point through an entire trajectory starting from $S_0=s$, with initial action $A_0=a$ and selecting subsequent actions according to a particular policy $\pi$.
\begin{align}
    q^\pi_D(s,a)&:=P\left(\bigcup_{t\geq 0} \{(S_{t},A_t,S_{t+1})\notin \mathcal F\}\Big|S_0=s,A_0=a\right)=P\left(\bigcup_{t\geq 0}\{ D_{t+1}=1\}\Big| S_0=s,A_0=a\right) \label{eq:pede}
    \end{align}
We also define the expected cumulative damage starting from $S_0=s,\ A_0=a$ and following policy $\pi$ as
\begin{align}\label{eq:depi}
d^\pi(s):=    \mathbb{E}_\pi\left[\sum_{t=0}^\infty D_{t+1}=1|S_0=s,A_0=a\right]
\end{align}
It can be shown that \eqref{eq:pede} and \eqref{eq:depi} coincide~\citep{2019.Paternainyfa}. Indeed, since  conditions \eqref{eq:cond_1}-\eqref{eq:cond_3} ensure that damage $D_{t+1}$ equals one at most once in the entire trajectory, the union in \eqref{eq:pede} is disjoint, and we can write 
\begin{align}
    q_D^\pi(s,a)&=\sum_{t=0}^\infty P\left(D_{t+1}=1|S_0=s,A_0=a\right)\label{eq:pd1}\\
&=\sum_{t=0}^\infty \mathbb{E}_\pi\left[D_{t+1}=1|S_0=s,A_0=a\right]=d^\pi(s,a)\label{eq:pdsum}
\end{align}
The next step is to link  $q_D(s,a)$ and $d^\pi(s,a)$ with the barrier function $B^\pi(s,a)$ in \eqref{eq:b_pi}.
In the next statement we consider a policy $\pi$ which  satisfies all constraints almost surely when starting from $S_0=s$ and $A_0=a$, so that  $q_D^\pi(s,a)=0$ and prove that for such a safe policy $B^\pi(s,a)=0$.  

\begin{theorem}
 Under conditions \eqref{eq:cond_1}-\eqref{eq:cond_3} \ $q_D^\pi(s,a)=0$ iff $B^\pi(s,a)=0$.  
\begin{proof} Starting from \eqref{eq:pd1}, and with the probabilities being non negative, we can write
\begin{align*}
q_D^\pi(s,a)=0\ &\Longleftrightarrow\  P(D_{t+1}=1|S_0=s,A_0=a)=0,\  \forall t\geq 0\\
    &\Longleftrightarrow\  P((S_{t},A_t,S_{t+1})\in\mathcal F |S_0=s,A_0=a)=0,\ \forall t\geq 0\\
    &\Longleftrightarrow\  \mathbb{E}_\pi\left[\mathbb I_{\left\{(S_{t},A_t,S_{t+1})\in\mathcal F\right\}}|S_0=s,A_0=a\right]=0,\ \forall t\geq 0\ \Longleftrightarrow\  B^\pi(s,a)=0
\end{align*}
 The second equivalence follows from the definition of $D_{t+1}$, the third one from the definition of $\mathbb I_{\{\cdot\}}$ with the  limit in \eqref{eq:v_pi_lim},  and the fourth one  from linearity and $\mathbb I_{\{\cdot\}}$ being non-positive.  

\end{proof}
\end{theorem}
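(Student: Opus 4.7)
The plan is to prove the biconditional as a chain of if-and-only-if equivalences, anchored on the identity $q_D^\pi(s,a)=\sum_{t=0}^\infty P(D_{t+1}=1\mid S_0=s,A_0=a)$ already established in \eqref{eq:pd1}. That representation is what makes the equivalence clean, because it converts the ``probability of the union'' on the left of \eqref{eq:pede} into a pointwise-in-$t$ statement, which is the right level of granularity to match term-by-term with the summands that define $B^\pi(s,a)$.

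First, since the summands in \eqref{eq:pd1} are non-negative, I would note that $q_D^\pi(s,a)=0$ is equivalent to $P(D_{t+1}=1\mid S_0=s,A_0=a)=0$ for every $t\geq 0$. Unfolding the definition of $D_{t+1}$, this is in turn equivalent to saying that $(S_t,A_t,S_{t+1})$ is safe almost surely for every $t$ under $\pi$. Next, I would translate this almost-sure safety into a vanishing expectation of the barrier indicator: for each finite $\lambda>0$, the integrand $\lambda\,\mathds{1}_{\{(S_t,A_t,S_{t+1})\notin\mathcal F\}}$ is non-negative, so its expectation is zero iff that safety event has probability one, independently of $\lambda$. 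Finally, summing over $t$ by linearity and interchanging the sum with the $\lambda\to\infty$ limit in \eqref{eq:v_pi_lim} via monotone convergence, I would conclude that $B^\pi(s,a)=0$ iff each of the time-indexed safety events has probability one, closing the chain back to $q_D^\pi(s,a)=0$.

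The main obstacle will be handling the barrier indicator rigorously given its $\{0,\infty\}$ convention in \eqref{eq:barrier_indicator}. I would stay strictly within the limit formulation \eqref{eq:v_pi_lim}, keeping $\lambda$ finite while summing and taking expectations, and only passing $\lambda\to\infty$ at the end. The dichotomy $B^\pi(s,a)\in\{0,-\infty\}$ highlighted in Remark~\ref{rmk:B_opt} is exactly what makes this work: if every safety event has probability one, all summands are identically zero and the limit is $0$; if some unsafe event carries positive probability at some time $t$, the corresponding $\lambda\,\mathds{1}$ term drives the expectation to $-\infty$, which is incompatible with $B^\pi(s,a)=0$ and yields the reverse implication.
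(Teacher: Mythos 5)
Your proposal is correct and follows essentially the same route as the paper: anchoring on the term-by-term representation \eqref{eq:pd1}, using non-negativity of the summands to reduce to a pointwise-in-$t$ statement, unfolding the definition of $D_{t+1}$, and passing to the barrier expectation via linearity and sign-definiteness. Your explicit handling of the $\lambda\to\infty$ limit through monotone convergence is slightly more careful than the paper's one-line justification, but it is the same argument.
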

Finally we take advantage of the identity between $d^\pi(s,a)$ and $q_D^\pi(s,a)$ to  derive a Bellman equation for the probability of being safe. 

\newpage
\begin{theorem}[Bellman equation for $q_D^\pi$]
\label{thm:Bellman}
 Under conditions  \eqref{eq:cond_1}-\eqref{eq:cond_3} the conditional probability of violating the constraints satisfies 
 $     q_D^\pi(s,a)=\mathbb{E}\left[D_{t+1}+ q_D^{\pi}(S_{t+1},A_{t+1}) |S_t=s,A_t=a\right]$.  
\end{theorem}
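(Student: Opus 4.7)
The plan is to reduce the Bellman identity for $q_D^\pi$ to the standard Bellman recursion for an action-value function, by exploiting the equality $q_D^\pi(s,a)=d^\pi(s,a)$ established in \eqref{eq:pd1}--\eqref{eq:pdsum}. Indeed, once that identity is in place, $q_D^\pi$ inherits the structure of an un-discounted expected return whose per-step reward is the damage variable $D_{t+1}$, so the argument mirrors the classical derivation of the Bellman equation for $Q^\pi$.

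First I would start from the definition of $d^\pi$ in \eqref{eq:depi} and split off the first term of the sum:
\begin{align*}
d^\pi(s,a) \;=\; \mathbb{E}_\pi\!\left[\sum_{k=0}^\infty D_{k+1}\,\Big|\,S_0=s,A_0=a\right]
\;=\; \mathbb{E}_\pi[D_1\mid S_0=s,A_0=a] + \mathbb{E}_\pi\!\left[\sum_{k=1}^\infty D_{k+1}\,\Big|\,S_0=s,A_0=a\right].
\end{align*}
The swap between infinite sum and expectation is legitimate by Tonelli's theorem since $D_{k+1}\in\{0,1\}$; in fact, conditions \eqref{eq:cond_1}--\eqref{eq:cond_3} force at most one of the $D_{k+1}$'s to be $1$ along any trajectory, so the whole sum is bounded by $1$.

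Next, I would apply the tower property by conditioning the tail on $(S_1,A_1)$ and use the Markov property together with the stationarity of $\pi$:
\begin{equation*}
\mathbb{E}_\pi\!\left[\sum_{k=1}^\infty D_{k+1}\,\Big|\,S_0=s,A_0=a\right]
\;=\; \mathbb{E}_\pi\!\left[\,\mathbb{E}_\pi\!\left[\sum_{k=1}^\infty D_{k+1}\,\Big|\,S_1,A_1\right]\,\Big|\,S_0=s,A_0=a\right]
\;=\; \mathbb{E}_\pi[\,d^\pi(S_1,A_1)\mid S_0=s,A_0=a\,],
\end{equation*}
where the inner conditional expectation collapses to $d^\pi(S_1,A_1)$ by re-indexing the time axis (using again that the underlying dynamics and policy are time-homogeneous). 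Combining the two displays yields the one-step recursion $d^\pi(s,a)=\mathbb{E}_\pi[D_1+d^\pi(S_1,A_1)\mid S_0=s,A_0=a]$, and by time-homogeneity the same identity holds with $(0,1)$ replaced by $(t,t+1)$.

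To conclude, I would invoke $q_D^\pi=d^\pi$ to transfer the identity from $d^\pi$ to $q_D^\pi$, obtaining exactly the claimed recursion. The only genuinely delicate point is verifying the Markov-property step when the absorbing state $s_D$ is entered: there one should check that both sides correctly equal zero thanks to \eqref{eq:cond_3}, which makes $D_{t+1}=0$ and $S_{t+1}=s_D$ almost surely from $s_D$ onwards, so $q_D^\pi(s_D,a)=0$ acts as a consistent boundary condition. This, rather than the algebra, is the step I expect to require the most care in writing out rigorously.
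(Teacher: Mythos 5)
Your proposal is correct and follows essentially the same route as the paper: both split off the first damage term and then use conditioning on the next state--action pair together with the Markov property and time-homogeneity to identify the tail as $\mathbb{E}[q_D^\pi(S_{t+1},A_{t+1})\mid S_t=s,A_t=a]$ (the paper phrases this via the law of total probability applied to $\sum_\tau P(D_{\tau+1}=1\mid\cdot)$, which equals your $d^\pi$ term by term). Your extra remark about the absorbing-state boundary condition is a welcome point of care but does not change the argument.
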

\begin{proof} We can set the first term of \eqref{eq:pdsum} apart and identify  the  conditional expectation of $D_{t+1}$ in it
\begin{align}
     q_D^\pi(s,a)&=\sum_{\tau=t}^\infty P(D_{\tau+1}=1|S_t=s,A_t=a)\\&=P(D_{t+1}=1|S_t=s,A_t=a)+\sum_{\tau=t+1}^\infty P(D_{\tau+1}=1|S_t=s,A_t=a)\label{eq:qdepe}
\end{align}
 By setting $Z_t=(S_t,A_t)$ to reduce notation, we  write the second term in \label{eq:qdepe} as
 \begin{align}
     &\sum_{\tau=t+1}^\infty P(D_{\tau+1}=1|S_t=s,A_t=a)=\sum_{\tau=t+1}^\infty\sum_{s'}\sum_{a'} P\left(D_{\tau+1}=1,Z_{t+1}=(s',a')|Z_t=(s,a)\right)\nonumber\\
     &=\sum_{\tau=t+1}^\infty\sum_{s'}\sum_{a'} P(D_{\tau+1}=1|Z_{t+1}=(s',a'),Z_t=(s,a))P(Z_{t+1}=(s',a')|Z_t=(s,a))\label{eq:Zt1Zt}
     \\&=\sum_{\tau=t+1}^\infty\sum_{s'}\sum_{a'} P(D_{\tau+1}=1|Z_{t+1}=(s',a'))P(Z_{t+1}=(s',a')|Z_t=(s,a))\label{eq:Zt1}\\
         &=\sum_{s'} \sum_{a'}q_D^{\pi}(s',a') P(Z_{t+1}=(s',a')|Z_t=(s,a))=E[ q_D^{\pi}(Z_{t+1}) |Z_t=(s,a)]
 \end{align}
 where we used the Markov property of the transition probability for  dropping $Z_t$ in \eqref{eq:Zt1Zt}. 
\end{proof}

The Bellman recursion in Theorem \ref{thm:Bellman} lets us compute the probability of being unsafe recursively, as the expected damage after the first transition, plus the average probability of damage starting from the next state.  As we  want to know if $q^\pi_D(s,a)=0$, we can stop if there is  damage after the first transition, and otherwise  step to the next state and repeat. This is the idea behind the barrier-learning Algorithm in next section .


\section{Assured reinforcement learning }\label{sec:assured-rl}

As introduced in previous sections, our goal is to learn a safe policy $\pi$ that maximizes the cumulative reward and finds trajectories that satisfy the constraints at all time steps almost surely. This safety condition on the policy can be described by the probability $q_D^\pi(s,a)$ being null for all $s$ in the set of starting points $\mathcal S_0$, or equivalently $B^\pi(s,a)=0$ for all $s \in \mathcal{S}_0,\ a\in \mathcal A$. As before, we assume that  each state transition yields a reward $R_{t+1}$ and a damage index $D_{t+1}$, which are distributed according to the transition probability $p(s'd,r|s,a)$. {Algorithm \ref{alg:barrier} specifies how to use this data in order to learn the set of safe policies. The Barrier  function $B(s,a)$ is updated incorporating the new information about safety provided by the damage index $D_{t+1}$ at each transition}.
\begin{algorithm}[htbp]
\SetAlgoLined
\KwData{$B$-function (initialized as all-zeroes)}
\KwIn{$(s,a,s',d)$ tuple}
\KwOut{Evaluated barrier-function $B(s,a)$}
$B(s,a)\leftarrow B(s,a)+\log(1-d)+\max_{a'}B(s',a')$

\KwRet{$B(s,a)$}
\caption{Barrier-learner}
\label{alg:barrier}
\end{algorithm}\\
Then, Algorithm \ref{alg:assured_ql} uses the decomposition of Theorem \ref{thm:decomposition} to embed this safety information in the q-function $Q(s,a)$.  Hence, we identify that the condition  $Q(s,a)=-\infty$ is equivalent to $B(s,a)=-\infty$ and this propagates information about safety from successor states. Indeed,  $Q(s,a)=-\infty$ flags the pair $(s,a)$ as one leading, with nonzero probability, to imminent damage or to a next state where all further actions lead to damage. Such a pair $(s,a)$ does not comply with our requirement  of satisfying the constraints almost surely along the trajectory, and thus it is discarded from the  set of state-action pairs that ensure feasibility. This is obtained by defining safe sets $\mathcal{B}(s)$ which specify the permitted actions at state $s$. If an action is deemed \emph{unsafe} (by inspecting $Q(s,a)=-\infty$) then it is taken out from $\mathcal{B}(s)$. In this sense, at every time step we are enforcing that the set $\mathcal{B}(s)=\text{argmax}_{a} B(s,a)$.
\SetKw{Break}{break}
\SetKw{Continue}{continue}
\begin{algorithm}[htbp]
\SetAlgoLined
\DontPrintSemicolon
\KwData{Starting state distribution $\mu$, discount $\gamma$, exploration $\epsilon$, learning rate $\eta$}
\KwResult{Optimal action-value functions $Q^*$ and $B^*$}
Initialize safe sets $\mathcal{B}(s)=\mathcal{A} \quad\forall s$\;

$Q(s,a)=B(s,a)=0 \quad\forall ~(s,a)$ \;

\For{$episode=0,1,\ldots$}{
Draw $s_0 \sim \mu$

Set $s \leftarrow s_0$ \;

\If{$\mathcal{B}(s)=\emptyset$}{\Continue}

$k \leftarrow 0$ \;

\While{$s \neq s_{D}$}{
Draw action $a\sim\pi$ \quad\big(e.g. $\epsilon$-greedily w.r.t. $Q(s,\cdot)$, with $a\in\mathcal{B}(s)$\big)\;

Apply $a$, observe $(s', r, d)$\;

$B(s,a) \leftarrow \texttt{barrier\_learner(}s,a,s',d\texttt{)}$\;

$Q(s,a) \leftarrow B(s,a) + (1-\eta)Q(s,a) + \eta\big(r+\gamma\max_{a'}(Q(s',a')\big)$ \;

\If{$Q(s,a)=-\infty$}{
$\mathcal{B}(s)\leftarrow \mathcal{B}(s)~\backslash~\{a\}$\;
}
$k\leftarrow k+1$

$s\leftarrow s'$

\If{$\mathcal{B}(s)=\emptyset$}{\Break}

}
}
\caption{Episodic Assured Q-Learning}
\label{alg:assured_ql}
\end{algorithm}

This in turn  narrows the set of feasible policies that are contenders for being optimal in terms of the reward. Notice that $Q(s,a)$  also incorporates the reward, and coincides with the standard q-function when the barrier is null, so that our Assured Q-Learning algorithm learns to optimize the reward  while  assuring feasibility on the fly.

\juan{can we declare that it converges or that if it converges then all states are safe? Then we could say that among the state actions that remain the effect of B is zero and Q is optimal as in the standard case? Not sure about these claims.}
\enrique{Que el Barrier Algorithm converge a.s. sale the el paper de Tsitsiklis de Q-Learning y nuestros resultados de la seccion 3. Creo que la del Q tambien sale en forma similar. La gran diferencia para mi que vale la pena investigar a futuro es que para mi el Barrier Learning Alg. converge mucho mas rapido. }

\begin{remark}[Avoiding unsafe states] Keeping track of the sets of safe actions $\mathcal B(s)$   in Algorithm \ref{alg:assured_ql} implies that we can identify unsafe states as those with $\mathcal B(s)=\emptyset$. This gives us a second level of safety during training, as we  prematurely stop an episode if we reach one of  such unsafe states. In this way, we prevent what we already identified as unavoidable future damage, and assume $d=1$ for updating $B(s,a)$ and $Q(s,a)$ without actually following the trajectory until the actual damage occurs. Notice that this propagates backwards,  so that we can identify unsafe states many steps ahead of where actual damage would occur.
\end{remark}

\section{Numerical Experiments}\label{sec:experiments}
In this section we compare the performance of Algorithm \ref{alg:assured_ql} against standard Q-Learning. We start by introducing the CMDP: a simple grid-world or maze that the agent must learn to navigate. We finish by comparing the learning curve of both algorithms and the number of times constraints have been violated during training.


\subsection{Experimental setup}

\begin{figure}[htbp]
    \centering
    \includegraphics[width=.5\linewidth]{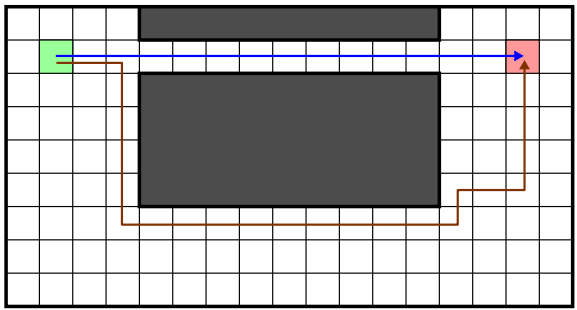}
    \caption{Example grid-world. The agent starts at the green square and if he reaches the red ``goal" achieves a reward of $+10$. Moving around has null reward and bumping into walls is unsafe. With discounting ($\gamma<1$), the optimal trajectory is the one that goes through the corridor (shown in blue). During learning, the corridor is potentially unsafe since many actions lead to a wall. As a result, some agents converge to sub-optimal trajectories that avoid the corridor (as the one shown in brown).}
    \label{fig:gridworld}
\end{figure}

We test our Algorithm on a simple grid-world shown in Figure \ref{fig:gridworld}. The agent must learn to navigate the maze, starting from the green ``start'' state and finishing in the red ``goal'' state. Available actions are \textit{up, down, right}~and~\textit{left}, with which the agent transitions to the corresponding neighbouring state, if possible. If the agent bumps into a wall ---which is considered \textit{unsafe}--- the episode terminates and the new episode begins in the preceding state.  Rewards are $0$ for moving around and $+10$ if the goal is reached. \\
We deploy Algorithm \ref{alg:assured_ql} and see how it compares against standard Q-Learning. To provide a fair comparison, when doing standard Q-Learning bumping into walls will have a reward $r=-\infty$. In any case, by specifying a discount factor $\gamma<1$ the optimal policy will be one that learns to reach the goal in the least number of steps. This policy is the one that goes through the narrow corridor, which is tricky to learn since 2 out of the 4 actions available lead to a wall inside the corridor. Suboptimal policies often learn to avoid the corridor and take the long way around (see Figure \ref{fig:gridworld}).

\subsection{Results}
We define the \textit{total episode length} as the number of steps it takes the agent to reach the goal while starting from the ``start'', and consider this to be a proxy for learning. If after $100$ steps the agent hasn't reached the goal, the episode is reset and the agent is taken to its starting position. 
We train $1000$ instances of each agent on the gridworld described, setting $\gamma=0.9$, $~\epsilon=0.1$,$~\eta=0.1$. Figure \ref{fig:c3_results} on the left shows the total episode length during training. As can be seen, the learning curve is similar for both Algorithms. The interesting thing to check is the number of \textit{constraint violations} during training, which is shown by Figure \ref{fig:c3_results} on the right. It depicts the cumulative constraint violations while learning ---that is, the number of times the agent has bumped into a wall. Notice that after some time the Assured agent's constraint violations plateau, while the Standard Q-Learning agent keeps taking unsafe actions. It is worth recalling that our Assured agent discards actions when he learns they lead to unsafe transitions, whereas the Q-Learning agent might still take an unsafe action (even if it has a corresponding value $Q(s,a)=-\infty$) as a consequence of the $\epsilon$-induced exploration.





\begin{figure}[htbp]
    \centering
    \includegraphics[width=\linewidth]{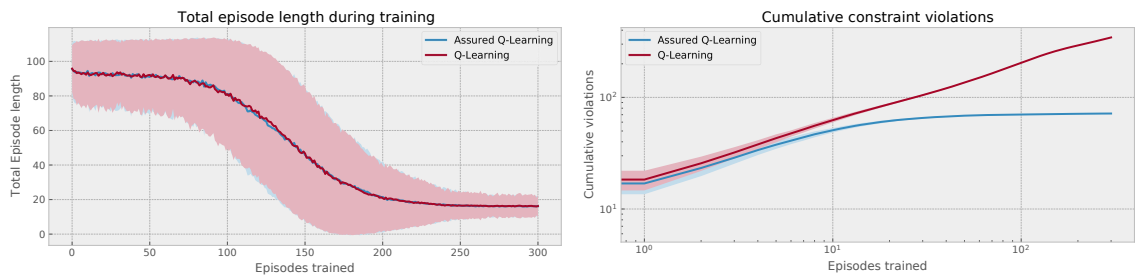}
    \caption{On the left: episode length during training, i.e., how many steps the agent takes to reach the goal. On the right: log plot of the cumulative constraint violations during training, i.e., the number of times the agent has bumped into a wall. Solid marks correspond to the average value over $N=1000$ independent trials, while shaded regions are the average value $\pm \sigma/\sqrt{N}$, with $\sigma$ the sample deviation. Notice that the learning curves (left) for both algorithms are similar. However, the Assured agents learn to avoid walls, why the other agents do not (right).}
    \label{fig:c3_results}
\end{figure}



\section{Conclusions}\label{sec:conclusions}
We addressed the problem of safe learning on a model-free framework, where optimal policies and constraints are hinted by the realization of rewards and damages. In this context, we defined value functions that can be decomposed in a constraint-abiding term and a term that optimizes the return. This barrier function can be inferred from damage data, and is crucial to learn the set of safe policies. Our proposed barrier-learning algorithm can be combined with Q-Learning, and learn to optimize rewards and discard unsafe policies simultaneously on the fly.

\clearpage 

\bibliography{refs}

\end{document}